\documentclass{article}

\usepackage{microtype}
\usepackage{graphicx}
\usepackage{subcaption}
\usepackage{booktabs} 
\usepackage[table]{xcolor}
\usepackage{enumitem}

\usepackage{hyperref}

\usepackage[preprint]{icml2026}

\usepackage{soul}
\usepackage{amsmath}
\usepackage{amssymb}
\usepackage{mathtools}
\usepackage{amsthm} 
\usepackage{thmtools}
\usepackage{thm-restate}
\usepackage{stmaryrd} 

\newtheorem{assumption}{Assumption}

\usepackage[capitalize,noabbrev]{cleveref}

\usepackage{adjustbox}
\usepackage{tcolorbox}

\definecolor{aliceblue}{rgb}{0.94, 0.97, 1.0}
\definecolor{lavender}{rgb}{0.9, 0.9, 0.98}
\definecolor{magnolia}{rgb}{0.97, 0.96, 1.0}
\definecolor{palecornflowerblue}{rgb}{0.67, 0.8, 0.94}

\newcommand{\TEI}{{\rm I}_o}
\newcommand{\AFI}{{{\mathcal{A} \mathcal{F}}_{{\cal I}}}}
\newcommand{\TEISet}{{\cal I}}

\definecolor{dblue}{RGB}{98, 140, 190}
\definecolor{dlblue}{RGB}{216, 235, 255}
\definecolor{dgreen}{RGB}{124, 155, 127}
\definecolor{dpink}{RGB}{207, 166, 208}
\definecolor{dyellow}{RGB}{255, 248, 199}
\definecolor{dgray}{RGB}{46, 49, 49}

\newcommand{\durl}[1]{\textcolor{dblue}{\underline{\url{#1}}}}

\newcommand{\eps}{\varepsilon}

\newcounter{KDefCounter}
\newcommand{\ddef}[2]
{
\vspace{1mm}
\refstepcounter{KDefCounter} 
{\bf Definition \theKDefCounter} (#1): {\it #2}
}

\newtheorem{lemma}{Lemma}

\usepackage[T1]{fontenc} 
\newcommand{\codequotes}[1]{\texttt{#1}}
\newcommand{\tasktext}[1]{\texttt{#1}}
\newcommand{\intenttext}[1]{\text{\emph{#1}}}

\newcommand{\Ifull}{\mathcal{I}}
\newcommand{\Ipartial}{\mathcal{I}_{\mathrm{par}}}
\newcommand{\Nfull}{N^{\mathrm{full}}} 
\newcommand{\Npartial}{N^{\mathrm{par}}}
\newcommand{\Nada}{N^{\mathrm{ada}}} 

\newcommand{\Pfull}{P_{\mathrm{full}}}
\newcommand{\Ppartial}{P_{\mathrm{par}}}
\newcommand{\Pcor}{P_{\mathrm{cor}}}
\newcommand{\Pada}{P_{\mathrm{ada}}}
\newcommand{\indicator}[1]{\left\llbracket{#1}\right\rrbracket}
\newcommand{\epscor}{\varepsilon_{\mathrm{cor}}{}}
\newcommand{\success}{\mathrm{success}}

\DeclareMathOperator*{\argmax}{arg\,max}
\DeclareMathOperator*{\argmin}{arg\,min}

\icmltitlerunning{Affordances Enable Partial World Modeling with LLMs}

\begin{document}

\raggedbottom

\twocolumn[
  \icmltitle{Affordances Enable Partial World Modeling with LLMs}



  \icmlsetsymbol{equal}{*}

  \begin{icmlauthorlist}
    \icmlauthor{Khimya Khetarpal}{yyy}
    \icmlauthor{Gheorghe Comanici}{yyy}
    \icmlauthor{Jonathan Richens}{yyy}
    \icmlauthor{Jeremy Shar}{yyy}
    \icmlauthor{Fei Xia}{yyy,xyz}
    \icmlauthor{Laurent Orseau}{yyy}
    \icmlauthor{Aleksandra Faust}{yyy}
    \icmlauthor{Doina Precup}{yyy}
  \end{icmlauthorlist}

  \icmlaffiliation{yyy}{Google Deepmind}
  \icmlaffiliation{xyz}{Work done at Google Deepmind}

  \icmlcorrespondingauthor{Khimya Khetarpal}{khimya@google.com}

  \icmlkeywords{Machine Learning, ICML}

  \vskip 0.3in
]

\printAffiliationsAndNotice{} 

\begin{abstract}
Full models of the world require complex knowledge of immense detail. While pre-trained large models have been hypothesized to contain similar knowledge due to extensive pre-training on vast amounts of internet scale data, using them directly in a search procedure is inefficient and inaccurate. Conversely, partial models focus on making high quality predictions for a subset of state and actions: those linked through affordances that achieve user intents~\citep{khetarpal2020can}. Can we posit large models as partial world models? We provide a formal answer to this question, proving that agents achieving task-agnostic, language-conditioned intents necessarily possess predictive partial-world models informed by affordances. In the multi-task setting, we introduce distribution-robust affordances and show that partial models can be extracted to significantly improve search efficiency. Empirical evaluations in tabletop robotics tasks demonstrate that our affordance-aware partial models reduce the search branching factor and achieve higher rewards compared to full world models.
\end{abstract}

\newcommand{\fix}{\marginpar{FIX}}
\newcommand{\new}{\marginpar{NEW}}
\newcommand{\greencheckmark}{\color[rgb]{0,.6,0}$\checkmark$}
\newcommand{\redcross}{\color[rgb]{.6,0,0}$\times$}

\section{Introduction}
A key challenge in artificial intelligence (AI) research is to build agents that can generalize across tasks. 
Multi-task reinforcement learning addresses this by leveraging the shared underlying structure across a distribution of tasks. Agents that can generalize across a variety of such sub-tasks have the potential to solve complex, long horizon, tasks. Similar to humans, it is desirable that AI agents are able to represent world knowledge and understand the dynamics of the world. Model-based reinforcement learning (MBRL)~\citep{sutton2018introduction} offers a family of approaches that build a transition dynamics model of the environment (i.e. the MDP's transition function), referred to as a world model, which can then be used for planning. 
While traditional MBRL often relies on interacting with simulators (online) \citep{kakade2003sample} or static datasets (offline), recent work suggests that learning abstract models across tasks can significantly enhance performance \citep{hafner2024masteringdiversedomainsworld}.

With the advent of the large models, whose representations are trained on internet-scale knowledge data, it has been hypothesized that knowledge of the kind provided by a world model, e.g. environment transition dynamics, is already available~\citep{yang2023foundation, bruce2024genie, xi2025rise}. However, work on leveraging such full models in MBRL is in its early stages~\citep{hao2023reasoning, benechehab2024zero, xi2025rise}. Some of the hurdles for applying LLMs to MBRL include the mismatch in observation format between RL agents and LLMs (e.g. in a robotics context, proprioception vs text), the expense in querying LLMs (which can make their use in traditional RL planners, such as Monte Carlo Tree Search prohibitive) and especially the inaccuracies of LLM prediction, including hallucinations (which get exacerbated when used in an RL context, because they can derail the policy optimization).

Conversely, partial models~\cite{talvitie2009simple} focus on making high quality predictions for a subset of state and actions. To exploit this property, we here study the process of leveraging LLMs as a {\em partial} world model. A partial model can guide an RL agent, but its performance as well as inference cost are not a bottleneck, because the LLM is queried only under circumstances where it can provide useful information, which can also be ``debugged" by the RL agent itself. Specifically, we use the LLM to guide an {\em affordance model} for search and planning. Affordances~\citep{gibson1977theory} arise at the interface of the agent and the environment and represent {\em action possibilities}. For AI agents, affordances facilitate adaptive control~\citep{Fikes1972, Korf1983, Drescher1991}, but the agent's understanding of which states afford which actions also allows it to learn partial world models~\citep{talvitie2009simple}, which only apply in limited circumstances and/or only predict relevant parts of the state. For example, an agent might only model temporally extended transitions that complete certain intents~\citep{khetarpal2021temporally}. In this work, we study the effect of inducing such partial world models by leveraging LLMs as both affordances and generative world models. 

Prior work~\citep{abel2014toward, khetarpal2021temporally, khetarpal2020can} has explored affordance-aware partial world models primarily in the context of a single task which is limiting for generalization in the multi-task setting. 
To address this, we extend the formalism of intents and affordances to a multi-task setting which accounts for the agent, the environment, and distribution of tasks. 
See Fig.~\ref{fig:overviewofapproach} where we present a conceptual framework of affordances in the multi-task setting, that considers the distinction between intents internal to an agent embodiment versus the intents specific to tasks presented in conjunction with the environment. 
The delineation between task-specific and task-agnostic intents (i.e. anchored on agent dynamics) in our framework allows for building affordances that consider both the agent embodiment and the physical workspace i.e. the environment resulting in generalized notion of affordances. Similar delineation of task-agnostic and task-specific has been studied by other works~\citep{shin2023pivotalrolelanguagemodeling, flesch2022orthogonal}, where the commonality is that task-agnostic intents are tied to unsupervised reward-free signal~\cite{zhang2020task, burda2018large}, while task-specific intents are closely tied to the context~\citep{sodhani2021multi}.

\begin{figure*}[th!]
    \centering
    \includegraphics[width=1.0\linewidth]{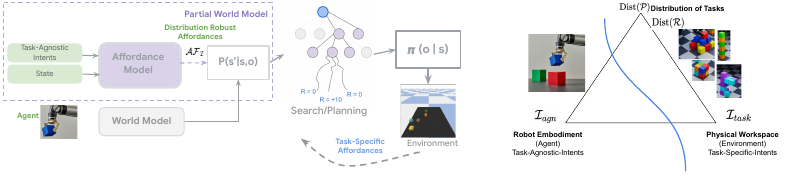}
    \caption{\textbf{Generalizing the concept of affordances to a multi-task setting} is achieved by categorizing agent intents as task-agnostic—grounded in the agent's embodiment, such as a robot's ability to pick or place objects—and task-specific, which depend on the environment and task distribution. By considering the three primary axes of agent, environment, and task distribution, the interplay between these intents facilitates the emergence of affordances at the boundary of the agent-environment interaction. This framework induces partial world models that effectively generalize across tasks, resulting in provably efficient planning.} 
    \label{fig:overviewofapproach}
\end{figure*}

In this paper, we bridge the gap between LLM knowledge and robotic planning by proving that agents capable of achieving task-agnostic, language-conditioned intents necessarily possess predictive partial-world models (Theorem 1). We show that these models, informed by distribution-robust affordances, enable provably efficient search by focusing only on relevant state-action transitions (Theorem 2). We demonstrate how to extract these models in multi-task tabletop robotics by applying a generalized notion of affordances to agent-environment interaction. Our key contributions are:
\begin{enumerate}[itemsep=-0.5cm, parsep=0.5cm]
    \item \textbf{A multi-task affordance framework} that delineates agent-embodied (task-agnostic) intents from environment-driven (task-specific) intents, enabling robust generalization across a distribution of tasks (Sec.~\ref{sec:formalization}). 
    \item \textbf{Theoretical proofs} establishing that (a) achieving a set of language-conditioned intents implies the existence of an underlying partial world model, and (b) using such models significantly improves search efficiency in long-horizon planning (Theorem~\ref{theorem:partialworldmodels}, Theorem~\ref{theorem:fastersearch}; Sec.~\ref{sec:theory}). 
    \item \textbf{Empirical validation in tabletop robotics} demonstrating that leveraging LLMs as both affordance filters and generative world models reduces the search branching factor and achieves higher rewards compared to full world models (Sec.~\ref{sec:experiments}). 
\end{enumerate}

\section{Preliminaries}
\label{sec:background}

\paragraph{Problem Setting.}
We consider table-top robotics tasks $T^1, T^2, \dots T^N$ drawn from a distribution of related  tasks i.e. $T^i \sim \text{Dist}(\mathcal{T})$, where each task $T^i =\langle {\cal S}, {\cal A}, r^i, P^i, \gamma \rangle $ can be viewed as a Markov Decision Process (MDP), comprised of a set of states ${\cal S}$, a set of actions ${\cal A}$, transition dynamics model $P^i(s'|s,a) \sim \text{Dist}(\mathcal{P})$ and reward function $r^i(s) \sim \text{Dist}(\mathcal{R})$, with a fixed distribution of initial states. Robotic tasks typically use fine-grained primitive actions e.g. joint angles of a robot arm, resulting in a complex action space. Therefore, it is common in practice to work with abstract actions (i.e., controllers or options) $o \in {\cal O}$ consisting of predefined low level policies for achieving short term goals, e.g. pick up an \emph{orange block} and place it at location $x, y$. The coordinates associated with block locations depend on the bounds of the physical workspace of the robot gripper. For a task $T^i$, the dynamics models in this paper focuses on  $P^i(s'|s,o)$, that is, the state $s'$ after execution of the abstract action (or option) $o$ is completed starting at state $s \in {\cal S}$. Equivalently we will use $P^i(\tau | s, o))$ for full trajectories (i.e. all state-actions from the start to the end of an abstract action execution).

\textbf{Model-based Reinforcement Learning (MBRL)} usually considers models of the environment that approximate the true model $r^i,P^i$. The focus is typically on models that can produce samples of future states. An approximate model $\hat{M}: \langle\hat{r}, \hat{P}\rangle$\footnote{For simplicity, in this paper we only model transition dynamics $\hat{P}$ and assume the reward function is known through the simulator.} can be leveraged to approximate the optimal policy of the corresponding approximate MDP $\hat{M}$ through a planning or search process. Starting at any state $s$, the model can be used to generate `rollouts` or trajectories modeled with $\langle\hat{r}, \hat{P}\rangle$ and these are used to provide value estimates for action choices $(s,o)$ starting at $s$. The corresponding policy would then select the action with the highest value estimate. Traditionally, world models have been learned from data; during planning, in the setting of a generative model, the model is called repeatedly to generate samples of next states from given states and actions, with the number of calls referred to as the \textit{sample complexity}.

\textbf{Partial Models.} Learning an accurate model can be quite difficult, requiring a lot of data. Moreover, the model does not need to be accurate everywhere, as long as it is accurate in relevant places, and/or it provides useful information for identifying good actions. A useful approach is to build {\em partial models}~\citep{talvitie2009simple}, which only make predictions for specific parts of the state-action space. Partial models come in two flavors: predicting only the outcome of a subset of state-action pairs, or making predictions only about certain parts of the state space.  

\textbf{Affordances.} Building on work from~\citet{khetarpal2021temporally}, we consider partial models of the world that make predictions about the consequences of abstract actions in a given state ${\hat P}(s'|s, o)$ for which we leverage the notions of \emph{intents} and \emph{affordances}. 

\ddef{Temporally Extended Intent $\TEI$ \cite{khetarpal2021temporally}}{ 
A temporally extended intent of abstract action $o\in\mathcal{O}$, $\TEI: \mathcal{S} \to \text{Dist}(\Gamma)$ specifies for each state $s$, a probability distribution over the space of trajectories ${\Gamma}$, describing the \textit{intended} result of executing $o$ in $s$. The associated intent model (i.e. the intended probability of a trajectory $\tau$ when option $o$ is executed from state $s$) is denoted by $P_I(\tau |s,o ) = I_o( s, \tau)$. 
\label{def:intents}
}

\ddef{Degree of intent satisfiability $\zeta$}{A temporally extended intent $\TEI$ is satisfied to degree $\zeta$ 
iff $d(P_I(\tau | s, o), P_{truth}(\tau| s, o))\leq \zeta  , \forall (s, o, \tau) $\footnote{In the context of multi-task setting, $P_{truth}$ is the true distribution associated to a specific task $P_{truth}(  \cdot | T_i)$.}, where $d$ is a user-specified metric between probability distributions and $\tau$ denotes the trajectory starting in state $s$ and following the abstract action $o$. A set of intents $\TEISet$ has satisfiability $\zeta$ iff all its elements have satisfiability $\zeta$.
\label{def:satisiability}}

Definition~\ref{def:intents} describes what an agent expects to happen when it executes an abstract action (e.g. \codequotes{pick-up-orange-block}). Rather than defining a single success state, the intent is a distribution over entire trajectories, allowing variability into how a task might be completed. This view is essential in evaluating whether abstract actions are afforded. An intent is "satisfied" (Definition~\ref{def:satisiability})) only when the realized outcome is sufficiently "close" to the true distribution (i.e. true agent-environment specification). 

\ddef{$\zeta$-affordance \cite{khetarpal2021temporally}}{
Given a set of temporally extended intents $\TEISet=\bigcup_{o \in \mathcal{O}} \TEI$, a $\zeta$-affordance set is a subset $\AFI \subseteq {\cal S} \times \mathcal{O}$, such that $d(P_I(\tau | s,o), P_{truth}(\tau| s, o))\leq \zeta, \forall (s,o)~\in~\AFI$.
\label{def:affordances}
}

An affordance is equivalent to a \emph{selection} of state-action pairs that are likely to achieve corresponding intents~(Definition~\ref{def:intents}). For example, the robot hand is right next to the block; if it runs the \codequotes{pick-up-block} action, the block ends up in the gripper, the intent is met and \codequotes{pick-up-block} is an affordance. The robot hand is across the room; if it runs the \codequotes{pick-up-block} action, it grabs thin air. The intent (block in hand) is not met, hence \codequotes{pick-up-block} is \emph{not} an affordance.

Large Language Models (LLMs) are trained to learn a distribution of the next tokens conditioned on the contexts from the training data. Interestingly, language generation has been well studied as a sequential decision-making problem and language modeling can indeed be understood as an instance of imitation learning~\citep{arora2022exposure}. LLMs have the potential to do well on various reasoning problems~\citep{valmeekam2022large}, albeit using reasoning traces generated in an autoregressive manner. They have also been hypothesized to encode knowledge akin to world models~\citep{hao2023reasoning}, due to extensive pre-training on vast amounts of human data. In the following section, we leverage LLMs as approximate dynamics world models $\hat{P}(s'|s,o)$, to predict future states given start state $s$ and an abstract action $o$. Experimental work will be reported on table top tasks and we assume access to low-level policies for a set of abstract actions. This is departure from prior work~\cite{khetarpal2020can, khetarpal2021temporally}, where the impact of using affordances to learn partial models was studied in toy domains with hand-designed programs that specified intent-completion functions and learned \emph{tabula rasa}. The latter is unfeasible in table-top domains, making it an interesting setting for LLMs as partial models in search and planning.

\section{Generalized Affordance Framework} 
\label{sec:formalization}
We extend the theoretical framework of affordances~\citep{khetarpal2020can} from single-task to multi-task RL. Our approach, illustrated in Fig.~\ref{fig:overviewofapproach}, centers on three primary axes: the \textbf{agent} defined by the robot embodiment e.g. a robot arm gripper, the \textbf{environment} defined by the bounds of the robot's physical workspace e.g. a table top, and the \textbf{distribution of tasks}. 
In this setting, sequential decision making tasks are sampled from this distribution $\mathcal{T}^{i} \sim \mathcal{T}$ comprising a set of tasks related to each other. Once defined, the distribution is used for planning and evaluation. While affordances are fundamentally a function of both the agent and the environment~\citep{chemero2003outline, gibson1977theory}, this dependency becomes uniquely pronounced in multi-task scenarios, as detailed in the following sections.

\textbf{Intents.} Formally, we characterize the set of all intents $\TEISet = \TEISet_{agn} \cup \TEISet_{task}$ in two classes; 1) \textbf{task-agnostic intents} $\TEISet_{agn}$, which are based on an agent's internal dynamics, e.g. a robot embodiment, whereas, 2) \textbf{task-specific intents} $\TEISet_{task}$ captures the set of intents that the environment or the physical workspace poses, depending on the task presented to the agent. 
Task-agnostic intents are the robot's basic skills (what it can physically do), and Task-specific intents are specific desiderata in specific context (what it is trying to achieve right now). Task-agnostic intents can be achieved simply because of how the robot is built (its embodiment) and what is physically in front of it. These intents are general because they do not depend on what the specific task is. When the robot sees a red block, the \intenttext{pick up red block} intent is task-agnostic. It does not matter if the robot's job today is to build a tower, clean the table, or sort by color. As long as the red block is visible and nothing is on top of it, the robot can pick it up. Task-specific intents depend heavily on the high-level task with a certain reward specification. These actions are usually complex combinations that lead to a high reward but are only afforded (possible/valid) in very specific situations. If a the robot is tasked to \tasktext{stack-blocks-while-holding-red-block}, and there is a blue block on the table, \intenttext{stack red block on top of blue block} intent is task-specific. This action is only meaningful if the current task is \tasktext{build-a-stack}. If the task changed to \tasktext{put-all-blocks-in-a-box}, stacking them might be useless or even wrong.

\begin{figure}[h!]
    \centering
    \begin{subfigure}[b]{0.48\linewidth}
    \includegraphics[width=\linewidth]{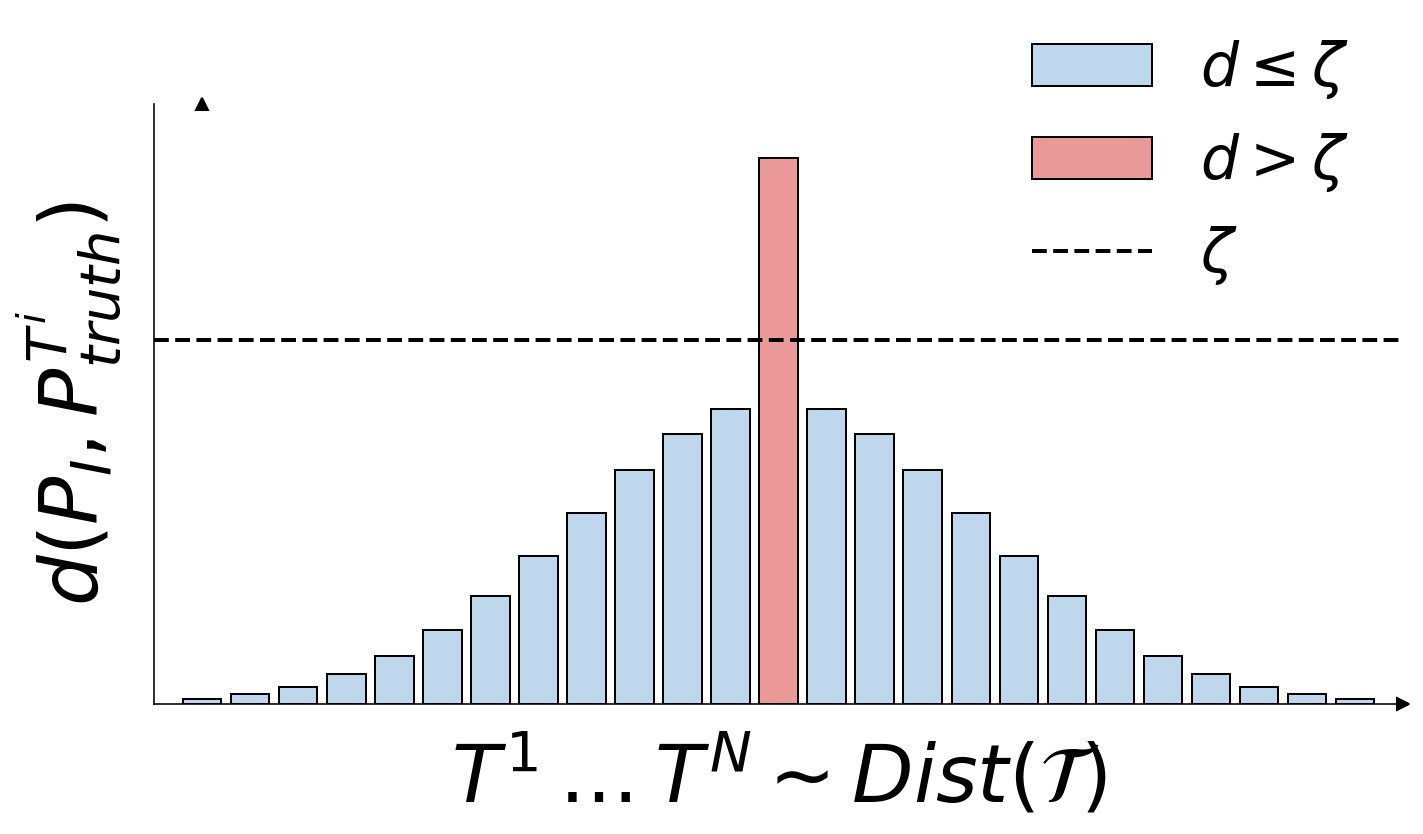} 
    \caption{Task Agnostic Intent} 
    \label{fig:taskagnostic}
    \end{subfigure}
    \begin{subfigure}[b]{0.48\linewidth}
    \includegraphics[width=\linewidth]{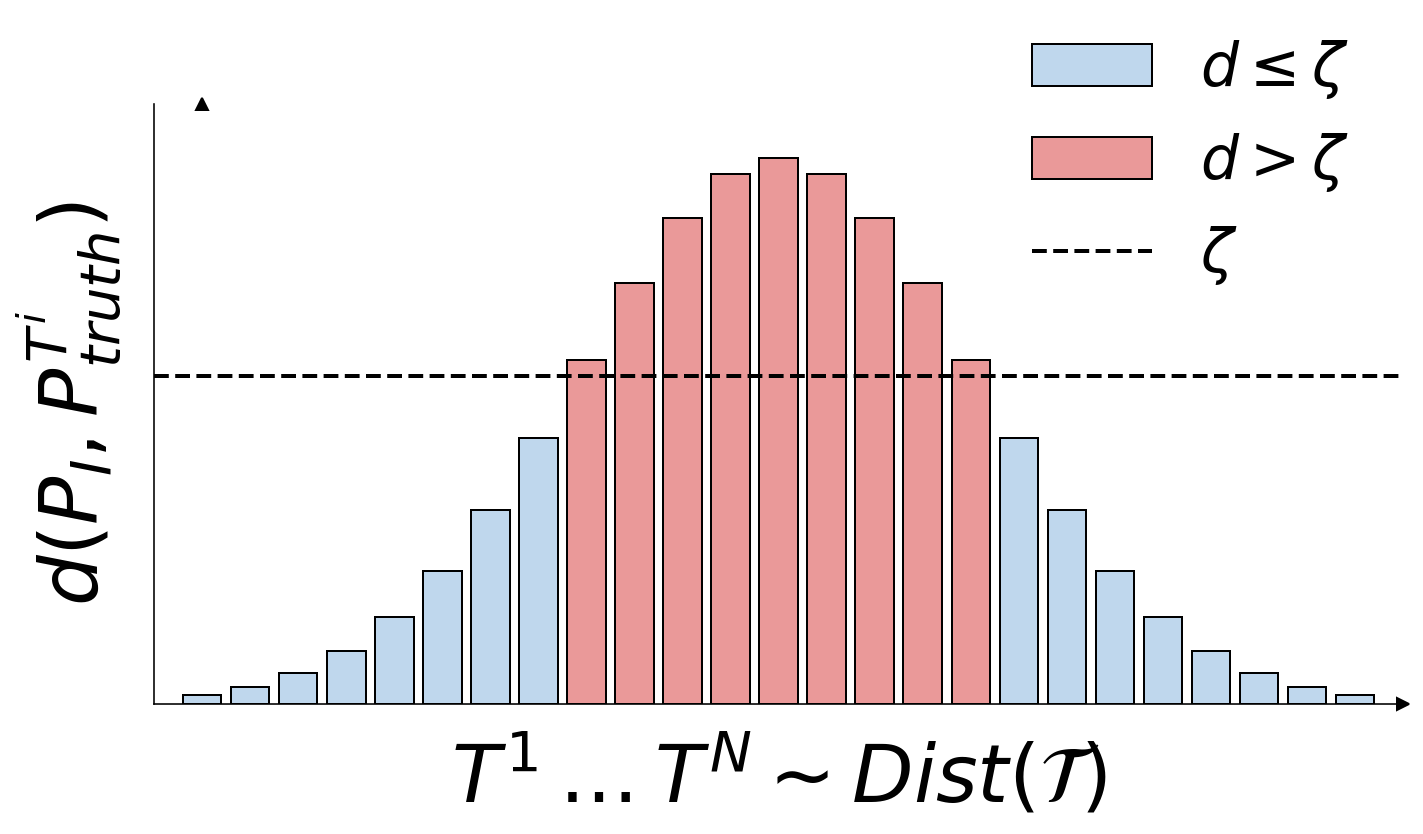}
    \caption{Task Specific Intent} 
    \label{fig:taskspecific}
    \end{subfigure}
     \caption{\textbf{Characterizing intents as task-agnostic and task-specific in the context of multi-task setting}. For an agent embodiment, a task agnostic intent is satisfied to a degree $\zeta$ across the entire task distribution as shown on the left, whereas, a task specific intent doesn't have the same coverage of satisfiability across the task-distribution as its specific to certain tasks only.}
    \label{fig:multitask-intentformluation}
\end{figure}

\textbf{Affordances for Multi-Task Setting.} We now characterize the set of state-options pairs, namely affordances, that facilitate the completion of task-agnostic or task-specific intents. Prior research ~\citep{khetarpal2021temporally} has typically ignored the case where we'd need to work with affordances across multiple tasks selected from a fixed distribution. 

Ideally, affordances would allow the agent to transfer knowledge between tasks with shared underlying structure. We extend here the definition to a probabilistic statement regarding the satisfiability of particular intents, grounded directly in the sampling distribution for the given tasks. This allows us to distinguish between \textbf{distribution-robust agent affordances}, which remain achievable with high probability across diverse task configurations (e.g., grasping an unobstructed block), and \textbf{task-specific affordances}, which may only be satisfied in restricted scenario. For example, in the table-top domain, achieving the intent of grasping a block that has nothing on top should be possible with high probability, consistently across tasks (Fig.~\ref{fig:taskagnostic}), regardless of the block configuration present or desired. However, placing a block inside another one may only be possible in a small number of situations in which there are hollow blocks around of appropriate sizes (Fig.~\ref{fig:taskspecific}). Intuitively, knowledge that is shared across the distribution facilitates transfer to improve learning speed, whereas task-specific knowledge ensures the agent can maximize value for the current objective. Formally,

\ddef{{Distribution-Robust Agent Affordances}}{\label{def:tagnaff}
Given a distribution of tasks $\text{Dist}(\mathcal{T})$, a set of temporally extended intents $\TEISet=\bigcup_{o \in \mathcal{O}} \TEI$, and constants $\zeta, \delta\in (0,1)$, a $\zeta$-affordance set $\AFI$ (see Def.~\ref{def:affordances}) is $\delta$-robust to distribution $\text{Dist}(\mathcal{T})$ iff the probability that the affordance does not satisfy the intent $\TEI$ to a degree $\zeta$ in task $T \sim\text{Dist}(\mathcal{T})$ is bounded by $\delta$. That is, $\forall s, o \in \AFI$ 
\begin{align}
\begin{split}
    P \Big(  d(P_I(\tau | s, o, T), P_{truth}(\tau|s, o, T)) &\geq \zeta  \; \Big| \\ \; T \sim \text{Dist}(\mathcal{T})\Big) \leq \delta \; .
\end{split}
\end{align}
}

To illustrate, we refer the reader to Fig.~\ref{fig:multitask-intentformluation}, which depicts a task agnostic intent that satisfies Def.~\ref{def:tagnaff} grounded in the distribution of tasks, and an intent which doesn't meet the desired criteria resulting in a task-specific intent.

Building on the multi-task affordance formalism, we realize generalized framework depicted in Fig.~\ref{fig:overviewofapproach}, where an agent accesses an imperfect abstract world dynamics model $P(s'|s,o)$. Unlike naive full world models that inefficiently predict over the entire state-action space—often leading to inaccurate Monte-Carlo planning results (Sec.~\ref{exp:3blockstask})—we induce a \textit{partial world model} to restrict predictions to a relevant subset. We input the current state and a set of $k$ pre-defined intents $\TEISet$ into an affordance model to yield distribution-robust affordances $\AFI$ (Def~\ref{def:tagnaff}). These affordances constrain the world model's predictions, substantially reducing the search branching factor and improving action-value estimates for the policy $\pi$. While the distribution-robust affordances are relevant across tasks, the agent is implicitly informed of task-specific intents via the reward signal during the subsequent environment interaction.
\begin{figure}[h!]
    \centering
    \includegraphics[width=0.7\linewidth]{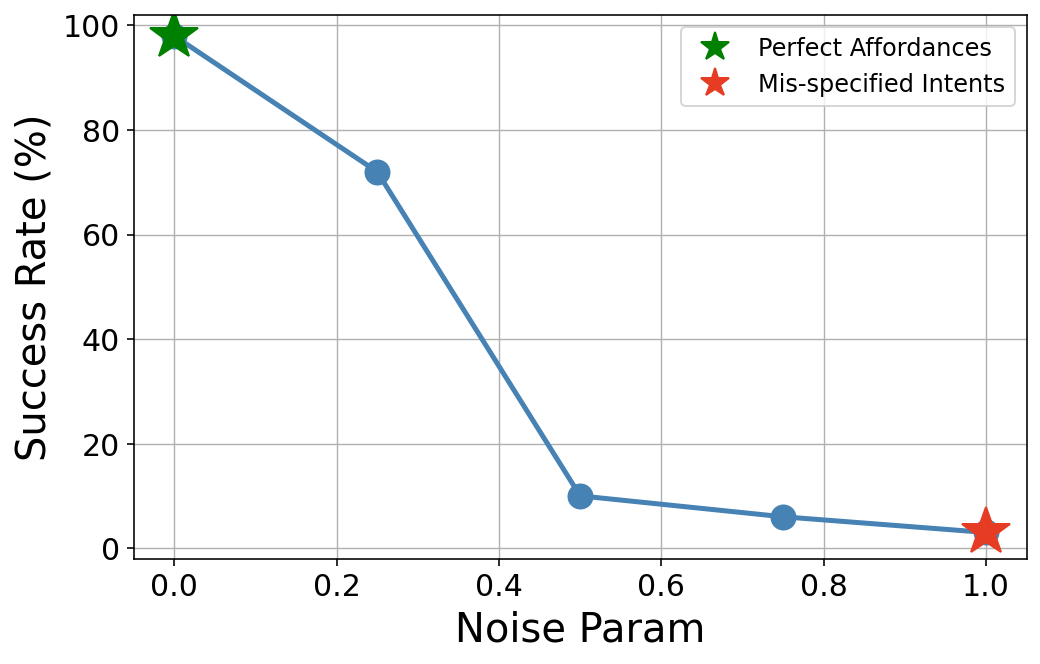}
    \caption{\textbf{Improvements in affordance model accuracy results in better performance of the search policy.} Given a fixed budget for search, a perfect world model, any errors in affordance prediction translates to catastrophic failures in the search policy.}
    \label{fig:demo_progamaticaffordances}
\end{figure}

\textbf{Illustration.} To demonstrate the utility of the generalized affordance framework, we evaluate the pipeline shown in Fig.~\ref{fig:overviewofapproach} on a 3-block stacking task using a simulated robot arm. By employing a perfect world model (the simulator) alongside programmatic affordances derived from task-agnostic intents, we examine how varying the accuracy of affordable actions affects search efficacy. As shown in Fig.~\ref{fig:demo_progamaticaffordances}, higher affordance model accuracy directly correlates with improved search policy performance (green
star). Conversely, mis-specified affordances lead to catastrophic failures regardless of the world model's fidelity, particularly under constrained search budgets (red star). This highlights that since affordances emerge at the boundary of agent-environment interaction, grounding the agent’s affordance beliefs in an adaptive fashion is critical for robust planning.

\section{Theoretical Analysis}
\label{sec:theory}
We now provide a theoretical analysis of the multi-task affordance framework. In Sec.~\ref{sec:partialmodelencodedresult}, we establish that achieving a set of intents implies the existence of an underlying partial world model through Theorem~\ref{theorem:partialworldmodels}. In Sec.~\ref{ref:speedupresult}, we study the efficiency in search enabled by partial world models. We show that at a small price, adapting the model enjoys the potential exponential speedup of the partial model, while being robust to missing intents from the full model.

\subsection{Distribution-Robust Agent Intents induce Partial World Models}
\label{sec:partialmodelencodedresult}
To facilitate the theoretical analysis of our generalized framework, we specialize the class of sequential tasks under consideration. 
We follow the standard formalism for expressing sequential tasks using linear temporal logic, with a focus on three simple types of tasks: i) \textit{actions}, taking an (abstract) action $O = o$, ii) \textit{outcomes}, achieving a desired outcome from an action, denoted $X(s\in g)$ where $g\subseteq \mathcal S$, iii) \textit{goals}, reaching a goal state $F(s\in g)$. 
Here $X(s\in g)$ requires the desired state(s) $s$ to be reached as an immediate outcome of the previous (abstract) action $O = o$, and $F(s\in g)$ requires only that the goal state is reached at some point in the future.  
For example, stack blocks in r / g / b order, $F(S \in g_{rgb})$. 
To denote tasks comprising of sequences of these primitives we use angular brackets e.g. $T^i = \langle o, X(s), \ldots \rangle$. 
See \cref{app:prooftheorem1} for details. 
For the purposes of our analysis, we define depth-n tasks~\citep{nangue2020boolean}.

\ddef{{Depth-n tasks}}{ \label{def: depth n tasks}
Given a set of abstract actions $\mathcal{O}$ and states $\mathcal S$, we consider the set of depth-$n$ tasks $\mathcal T(n)$ to be the set of tasks $T^i$ that can be solved by achieving a sequence of at most $n$ \textit{actions}, \textit{outcomes}, or \textit{goals} e.g. $T^i = \langle o_{i,0}, \ldots, o_{i,m}\rangle$ where $m\leq n$.}

\begin{assumption} 
\label{ass:communicatingCMDP}
The MDP with transition function $P(s'\mid o, s)$ is communicating (i.e. there exists a policy that gets you from any one state to any other state in a finite number of steps with non-zero probability)~\citep{bartlett2012regal}, and there exists a state $s$ such that at least two actions are afforded to the agent.  
\end{assumption}

To ground this formulation in planning, an agent is said to be a competent planner when it can achieve high depth-n tasks with a high probability. Formally, the optimal planner for depth-n tasks satisfies,
$$
    \pi^*(o \mid s, T) = \argmax\limits_{\pi} P_\pi(\tau \models T\mid s), \forall \, s\in \mathcal S, T \in \mathcal{T}(n),
$$
where $P_{\pi}(\tau \models T\mid s)$ is the probability that the trajectory generated by $\pi$ from initial state $s$ corresponds to achieving the task $T$. We can then define an agent's distribution-robust competence for tasks in line with Def.~\ref{def:tagnaff}, by choosing the distance metric $d(\TEI(s, \tau \mid T), P(\tau \mid \pi,s, T))$ to be the difference in probability mass over trajectories that satisfy the task $T^i\in \mathcal T$ for the agent, and the corresponding probability mass for the optimal agent with the same affordance set.
We also choose $\text{Dist}(\mathcal T)$ to be an arbitrary task distribution that includes $\mathcal T(n)$, and assume there is some upper bound $\delta$ on the probability that the agent violates the $\zeta$-regret bound for any given task $T^i \sim \mathcal T(n)$ independent of the agent's performance on any other task $T^j \sim \mathcal T(n)$.

\ddef{$(n, \zeta, \delta)$-optimal agent}{
\label{def:agent}
Given affordances $\AFI$ (Def.~\ref{def:tagnaff}), an  $(n, \zeta, \delta)$-optimal agent is such that
{
\[ P ( \mathbf{R}(\pi_i, T^i) \geq \zeta \, \mid \;  T^i \sim \mathcal T (n)) \leq \delta. \]
}
where, $\pi_i$ is the policy of the agent for a depth-$n$ task $T^i \sim \mathcal T(n)$ and 
{
\[
    \mathbf{R} (\pi, T^i) :=\max\limits_{\pi'} \left\{ P_{\pi'}(\tau \models T^i)\right\} - P_{\pi}(\tau \models T^i).
\]
}}
We can interpret $\mathbf{R}(\pi, T^i)$ as the agent's expected regret for task $T^i$, where the reward function is $+1$ if the agent achieves the task and $0$ otherwise. 
Therefore, a $(n, \zeta, \delta)$-optimal agent describes an agent that is a competent planner to depth $n$, where the probability the agent violates a $\zeta$-regret bound for any depth-n task is bounded by some $\delta$. 

\begin{restatable}[Distribution-Robust Affordances induce Partial World Models]{retheorem}{restatemaintheorem}
\label{theorem:partialworldmodels}
Let $\pi$ be the policy of a deterministic $(n,\zeta, \delta)$-optimal agent (Def.~\ref{def:agent}), in an environment satisfying Assumption~\ref{ass:communicatingCMDP}. 
$\pi$ encodes a partial world model $\hat{P}_{\mathrm{par}}(s'\mid o, s)$, and the worst case error for this world model is bounded by, 
{
\[
    P ( \big| \hat{P}_{\mathrm{par}}(s'\mid o, s) -  P(s'\mid o, s)| \big| \geq  \phi + \epsilon ) \leq \frac{\rho^{n\, \epsilon}}{1 - \rho} \; ,
\]
}
$\forall (s,o) \in \AFI $, where
{
\[
    \phi = \frac{1}{2}\sqrt{\frac{(1 + \zeta)}{ n\, (1- \zeta)}} \, ,\quad \rho = 2\sqrt{\delta (1-\delta)}\, \text{, and }  \epsilon > 0 .
\]
}
\end{restatable}

Proof is in Appendix~\ref{app:prooftheorem1}. Theorem \ref{theorem:partialworldmodels} implies that there is a characteristic scale for the error in the partial world model we can recover from the agent's policy, given by $\phi$, which scales with $\mathcal O(n^{-1/2})$, but increases rapidly for higher intent mis-specification i.e. $\zeta$ close to 1 (where the regret bound becomes very weak). For $\delta < 1/2$ the probability that the error is $\epsilon$ larger than the characteristic error $\phi$ decays exponentially in $\epsilon$, with an exponent given by $n$. From this we conclude that for agents that perform well on a task distribution involving planning over its afforded actions to high depth (n), the agent's policy encodes an accurate partial world model with respect to those afforded actions. Specifically, we can recover a partial world model from the agent's policy that deviates from the true transition probabilities with an error that decays exponentially. 

\subsection{Partial World Models Enable Provably Efficient Search}
\label{ref:speedupresult}

We are interested in understanding the increase in search efficiency when using a partial world model $\Ppartial$ as opposed to a full model $\Pfull$. When sampling from $\Pfull$, the agent considers all $n = |\mathcal{O}|$ possible actions, which translates to $n = |\Ifull|$ intents. Alternatively, sampling from $\Ppartial$ is based on a predefined set of $k$ task-agnostic and -specific intents from $\TEISet$, all independently achievable and satisfied to degree $\zeta$ (Def.~\ref{def:intents}) across a distribution of tasks (Def.~\ref{def:tagnaff}). In other words, $\Ppartial$ has access to $k=|\Ipartial|$ intents ($\Ipartial\subseteq\Ifull$). 

\begin{assumption}
\label{ass:finitesmallsetofintents}
The size of the set of task-agnostic agent intents $\Ipartial$ is finite and much smaller than the number of all possible abstract actions $n = |\mathcal{O}|$ to be considered at any state in the search process: $k \ll n$.
\end{assumption}

The appeal of a partial model is due to the potential gigantic exponential speedup that can be obtained by greatly reducing the branching factor of complex combinatorial environments: suppose a task can only be solved in the full model by a single specific sequence of $L$ abstract actions, thus having an expected time to success of $n^L$. If this task can also be solved with the $k$ intents of the partial model, the latter would take $k^L$ steps to success in expectation. Using Assumption~\ref{ass:finitesmallsetofintents}, the speed up from the partial model can be so large as to make the intractable task tractable.

However, the pre-specified intents might not be perfectly aligned, or worse, if the partial model misses \emph{any necessary} intent to solve \emph{any} task, the expected number of trials to solve every task by the partial model $\Npartial$ is infinite. To protect against this, we define a \textbf{\emph{corrected} partial model} $\Pcor$ which assigns a small probability $\epscor$ to account for missing intents from the full model: for each intent $I_j$,
{\begin{multline*}
    \Pcor(I_j\mid \epscor) \\
    = (1-\epscor) \frac1k\indicator{I_j \in \Ipartial} + \epscor\frac1{n-k}\indicator{I_j \notin \Ipartial}\,.
\end{multline*}
}
Note that setting $\epscor=0$ recovers the partial model, while $\epscor=(n-k)/n$ recovers the full model. Moreover, we develop an \textbf{\emph{adaptive} model} $\Pada$ (see \cref{app:prooftheorem2}) that automatically competes with the optimal $\epscor$ \emph{for each task}, at the cost of a small $O(L)$ factor:

\begin{restatable}[Provably Efficient Search with a Partial Model]{retheorem}{restateefficiencytheorem}
\label{theorem:fastersearch}
The expected number of samples necessary for the adaptive corrected partial model to solve every task is at most a factor $O(L)$ of the best corrected model for each task:
{
\[
    \Nada \leq e(2L+3) \sum_{i\leq |\mathcal{T}|} \frac1{\max_{\epscor\in[0,1]}\Pcor(\success_i|\epscor)}
\]
}
where $L$ is the length of the sampled sequences of intents.
\end{restatable}

Proof is in~\cref{app:prooftheorem2}. The intuition here is that at the price of a small factor $(2L+3)e$, the adaptive corrected model enjoys the potential exponential speedup of the partial model, while being robust to missing intents from the model. 

\begin{table*}[t!]
\caption{\textbf{Inducing a partial model via affordances is more efficient in search, albeit at the cost of induced bias due to affordances in the block arrangement task.} Partial models are able to get to the reward in fewer search simulations as compared to a full world model, which does not get to any reward in limited budget of $4$ simulations. Results are averaged across $4$ independent seeds. Knowledge of affordances as shown here with the oracle baseline results in both, reduced number of MC simulations, and much fewer LLM calls.}
\begin{adjustbox}{width=2.1\columnwidth,center}
\begin{tabular}{c|c|c|c|c|c}
\hline
\hline
\textbf{[3 blocks]}
\cellcolor{aliceblue} \textbf{Method}  & \cellcolor{aliceblue} \textbf{Model}  & \cellcolor{aliceblue}  \textbf{MC-Search Score}  & \cellcolor{aliceblue} \textbf{Simulations} &
\cellcolor{aliceblue}  \textbf{LLM Calls}  &
\cellcolor{aliceblue}  \textbf{Steps to Completion} \\
\hline
\textcolor{red}{Full Model} w/ Few Shot Examples &  $P_{\texttt{LLM}}(s’|s,o)$ & \textcolor{red}{$0.0$} & \textcolor{red}{$>4$} & $40.0$ & \textcolor{red}{$>10$} \\
\hline
\textbf{Partial Model (Ours)} & $P_{\texttt{AFF-LLM}}(s’|s,o)$ & \cellcolor{aliceblue}  \textbf{$3.75$} & \cellcolor{aliceblue} \textbf{$2.75$} & $43.25$ & \cellcolor{aliceblue} \textbf{$5$} \\
\hline
Partial Model w/ \textcolor{cyan}{Oracle} Affordances & $P_{\texttt{Oracle-AFF-LLM}}(s’|s,o)$ & $5.0$ & $2.5$ & $15.75$ & $2$ \\
\hline
\hline
\textbf{[5 blocks]}
\cellcolor{aliceblue} \textbf{Method}  & \cellcolor{aliceblue} \textbf{Model}  & \cellcolor{aliceblue}  \textbf{MC-Search Score}  & \cellcolor{aliceblue} \textbf{Simulations} &
\cellcolor{aliceblue}  \textbf{LLM Calls}  &
\cellcolor{aliceblue}  \textbf{Steps to Completion} \\
\hline
\textcolor{red}{Full Model} w/ Few Shot Examples &  $P_{\texttt{LLM}}(s’|s,o)$ & \textcolor{red}{$0.087$} & \textcolor{red}{$> 4$} & $40$ & \textcolor{red}{$> 10$} \\
\hline
\textbf{Partial Model (Ours)} & $P_{\texttt{AFF-LLM}}(s’|s,o)$ & \cellcolor{aliceblue}  \textbf{$0.2239$} & \cellcolor{aliceblue} \textbf{$2.215$} & $40$ & \cellcolor{aliceblue} \textbf{$5$} \\
\end{tabular}
\label{tab:3blocksenv_mcts}
\end{adjustbox}
\end{table*}

\section{Empirical Illustration: LLM as Partial World Model for Efficient Planning}
\label{sec:experiments}
To illustrate the concepts empirically, we consider an agent with access to a world dynamics model $P(s'|s,o)$ represented by a pre-trained text-only large model.\footnote{We note that our approach is generally applicable to any large model e.g. VLM's can also be used.} Our approach concretizes the general framework depicted in Fig.~\ref{fig:overviewofapproach} by leveraging LLMs both for affordances and for world modeling (See Appendix Fig.~\ref{appfig:approach_overview}). To bypass the need for manual intent-completion programs and expensive data collection, we leverage an LLM as an affordance model $\texttt{f}_\texttt{affordances}(s_t, m)$. Given a natural language state representation $s_t$, the LLM generates $m$ affordable actions. These correspond to distribution-robust affordances and induce a partial world model $\Ppartial$, denoted as $P_{Aff-LLM}(s'| s, o)$ in this section. 

We empirically study; \textbf{[Q1.]} Can LLMs as affordance-aware partial world models lead to improved efficiency in search? (Sec.\ref{exp:3blockstask}), and \textbf{[Q2.]} Can we leverage distribution-robust agent affordances to induce useful partial world models across related tasks? (Sec.\ref{exp:5-7-blockstask}) We consider a simulated tabletop for long-horizon manipulation tasks using the pybullet simulator. We investigate block-rearrangement tasks with 3, 5, and 7 blocks to test the agent's performance as difficulty increases. For each configuration, the agent is tasked with satisfying the high-level task description \tasktext{move\_blocks\_close\_to\_each\_other}.

\subsection{LMs as partial world models can substantially improve the sample search efficiency}
\label{exp:3blockstask}

Given only the high-level language instruction, the task requires multi-step reasoning and planning. We first consider the 3-block configuration. We compare three methods, namely 
\begin{enumerate}
    \item \textbf{Full World Model} w/ Few Shot Examples: a pre-trained LLM as a generative world model $P_{\texttt{LLM}}(s'|s,o)$ prompted with the state $s$, action $a$, and few-shot (4 shots) examples to predict the future state $s'$, 
    \item \textbf{Partial Model (Ours)}: is an induced partial model, where given a state $s$, first a pre-trained LLM representing $\texttt{f}_\texttt{affordances}(s_t, m)$ generates m affordable actions, inducing a partial world model $P_{\texttt{AFF-LLM}}(s'|s,o)$ prompted with only state $s, o \in \AFI$, and few-shot (4 shots) examples to predict the future state $s'$, and
    \item \textbf{Partial Model with oracle affordances}: considers our method with programmatically  generated ground truth affordances to illustrate the bias induced from pre-specified intents. 
\end{enumerate}
Based on robot embodiment of a gripper arm \textbf{task agnostic intents} $\mathcal{I}_{\texttt{agn}}$ are specified as natural language description, with the idea of capturing invariance across tasks, resulting in $\AFI$. For e.g. ``a block is movable only if that color is present in the current state and when nothing is on top of it.'' We show more examples of these prompts in Appendix~\ref{app:task-agnostic-intents}. We use a simple implementation of Monte-Carlo (MC) sampling to build a policy from a world model (Fig.~\ref{fig:mcts}). We use the true reward model on each node (i.e. tasks are defined on states). From a real environment state, we expand a search tree using a model. We run $~4$ simulations, and each simulation expands a new un-visited node (using a random policy over untried actions). Random policy rollouts extend trajectories to at most 10 states (depth), seeking states with reward +10. We assess the likelihood that the model finds rewarding states (i.e. \textit{MC average reward}) averaged over multiple independent seeds. Whether modelled successful trajectories match trajectories in real test environment. (i.e. \textit{online policy evaluation}). When using a partial model, the MC tree is only expanded on affordable actions. In doing so, the search tree focuses modelling resources on trajectories that match predefined task-agnostic intents. 

\begin{figure}[h!]
    \centering
    \includegraphics[width=0.7\linewidth]{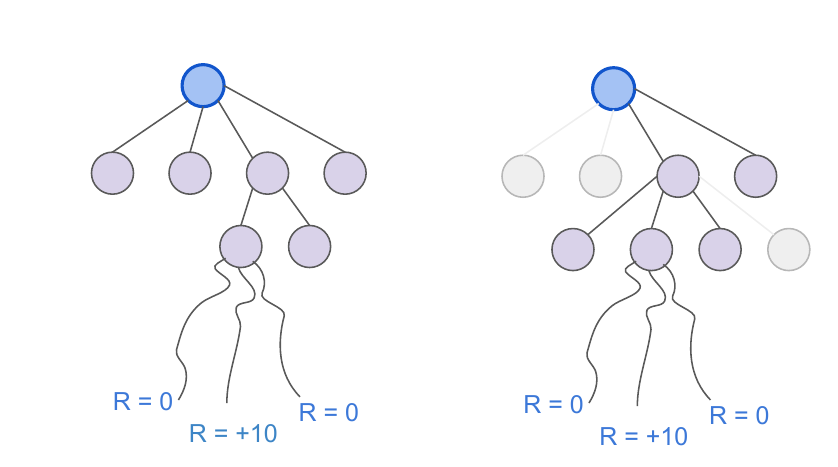}
    \caption{\textbf{Monte-Carlo sampling with a partial model} considers expanding the tree is only on affordances. However, search with a full model would expand all possible state-action pairs.} 
    \label{fig:mcts}
\end{figure}

\begin{table*}[ht!]
\caption{\textbf{Distribution-robust agent affordances are robust to related tasks.} We report the \textbf{1. avg leaf-node Score}, \textbf{2. avg tree score} reflecting the quality of the search tree on average across all nodes, \textbf{3. avg accumulated reward} reflecting the average accumulated reward by running MC-search as open loop policy on states that the model imagined, and \textbf{4. Online Policy Evaluation} reflects cumulative reward during online policy evaluation for search policy. Search guided with affordance-aware partial models constructs an overall better quality of tree, and results in a better policy for online RL as compared to the full model. Results are averaged across $4$ independent seeds with standard deviation in the brackets.}

\label{tab:5-7-blocksenv_mcts}
\centering
\begin{adjustbox}{width=2.1\columnwidth,center}
\begin{tabular}{c|c|c|c|c|c}
\hline
\hline
\cellcolor{aliceblue} \textbf{Method}  & \cellcolor{aliceblue} \textbf{Model}  & \cellcolor{aliceblue}  \textbf{Avg Leaf-Node Score}  &
\cellcolor{aliceblue}  \textbf{Avg Tree Score}  &
\cellcolor{aliceblue}  \textbf{Avg Accumulated Reward} &  \cellcolor{aliceblue} \textbf{Online Policy Evaluation} \\
\hline \cellcolor{aliceblue} \textbf{\texttt{num blocks} = 5} \\ \hline
\textcolor{red}{Full Model} with few-shot examples. &  $P_{\texttt{LLM}}(s'|s,o)$ & $0.089 (\pm 0.04)$ & $0.089 (\pm 0.04)$ & $0.65(\pm 1.26)$ & $0.7895(\pm 0.25)$ \\
\hline
\textbf{Partial Model (Ours)} & $P_{\texttt{AFF-LLM}}(s'|s,o)$ & \cellcolor{aliceblue}  $0.235 (\pm 0.087)$ & \cellcolor{aliceblue} $0.2314 (\pm 0.086)$ & \cellcolor{aliceblue} $0.9212(\pm 0.709)$ & \cellcolor{aliceblue} $1.56 (\pm 0.45)$ \\
\hline \cellcolor{aliceblue}  \textbf{\texttt{num blocks} = 7} \\ 
\hline
\textcolor{red}{Full Model} with few-shot examples. &  $P_{\texttt{LLM}}(s'|s,o)$ & $-0.01134 (\pm 0.057)$ & $-0.01134 (\pm 0.057)$ & $ -0.1869 (\pm 0.51)$ & $-0.836 (\pm 0.1059)$ \\
\hline
\textbf{Partial Model (Ours)} & $P_{\texttt{AFF-LLM}}(s'|s,o)$ & \cellcolor{aliceblue}  $0.0180 (\pm0.12)$ & \cellcolor{aliceblue} $0.020 (\pm 0.12)$ & \cellcolor{aliceblue} $0.386 (\pm 0.57)$ & \cellcolor{aliceblue} \textbf{$ 0.2141 (\pm 0.617)$} \\
\hline
\end{tabular}
\end{adjustbox}
\end{table*}

For the 3-blocks tabletop task configuration, we observe in Table~\ref{tab:3blocksenv_mcts} that inducing a partial model via affordances $P_{\texttt{AFF-LLM}}$ is more efficient in search, albeit at the cost of induced bias due to affordances. Not knowing the precise affordances requires marginally more LLM calls in the than using the perfect knowledge of affordances i.e. $P_{\texttt{Oracle-AFF-LLM}}$. The number of MC simulations it takes with each method is lower  when using a partial model (2.75) as compared to a full model $P_{\texttt{LLM}}$ (which exhausts the budget of \textcolor{red}{$> 4$} simulations and never reaches a rewarding state with MC Score of \textcolor{red}{$0.0$}). Besides, our approach also results in a better performing MC policy assessed with online evaluation in the test environment where it takes 5 steps to task completion on an average as compared to using a full world model (which exhausts the steps to task completion budget of \textcolor{red}{$> 10$} steps and does not complete the task across all runs).

\begin{tcolorbox}[colback=gray!10, colframe=gray!10, arc=4pt, boxrule=0pt]
\textbf{Key Insight I:} Inducing a partial world model via affordances is more efficient in search, albeit at the cost of induced bias due to affordances (See Table~\ref{tab:3blocksenv_mcts}).
\end{tcolorbox}

\subsection{Distribution-robust agent affordances facilitate partial models that improve planning across tasks}
\label{exp:5-7-blockstask}
Next, we ask if we can leverage distribution robust agent affordances to induce useful partial world models across related tasks? We study this by increasing the difficulty of the blocks environment with more blocks to be manipulated and arranged on the tabletop that requires longer horizon for planning exacerbating the need to guide search as the branching factor increases. We keep the $\mathcal{I}_{\texttt{agn}}$ (see Sec.~\ref{sec:formalization}) consistent from previous configuration of 3 blocks in both LLM as the affordance model alongside the full world model $P_{\texttt{LLM}}(s'|s,o)$, and the partial world model $P_{\texttt{AFF-LLM}}(s'|s,o)$. We report in Table.~\ref{tab:5-7-blocksenv_mcts} that with increasing difficulty in task configuration, our approach considerably reduces the branching factor resulting in fewer mcts simulations needed, and higher average accumulated reward during planning.

\begin{tcolorbox}[colback=gray!10, colframe=gray!10, arc=4pt, boxrule=0pt]
\textbf{Key Insight II:} Partial-models informed by distribution-robust agent affordances capture underlying invariance across tasks resulting in generalization of search efficiency (See Table~\ref{tab:5-7-blocksenv_mcts}).
\end{tcolorbox}

\section{Related Work}
\label{sec:relatedwork}
Generative models as world models constitute a broad research area. Recent works consider pre-training large-scale world models~\citep{yang2023learning}. While this is appealing, it is computationally expensive, requiring large amounts of data, compute, resulting in a non-trivial process to pre-train a world model. Moreover, in the presence of large scale knowledge representation in LMs, collecting data online from scratch is non-trivial, simply due to the challenges presented by such approach i.e. sample efficiency or dependence on visiting each transition tuple sufficient number of times (e.g. in count-based method).

\citet{bruce2024genie} proposed to learn a generative interactive model analogous to a simulator, trained on 200k hours of video data, inferring the latent actions between each frame a with the latent action model to predict the next frame. Our approach is complimentary as it can be leveraged to induce a partial model to make predictions on only \emph{affordable} latent actions, thereby reducing hallucinations of unrealistic futures as highlighted in their limitations.   

Closest to our work, \citet{hao2023reasoning}[RAP] also utilize plug-and-play with LLMs although their focus both in domains ( such as blocksworld which considers a PDDL specification of the planning problem) and research question ( such as planning efficacy with full models) is different. For example, their focus in on Math like reasoning tasks, such as GSM8k which are typically i.i.d. in nature. On the contrary, our work is focused on sequential-decision making problems such as long-horizon table top tasks with online on-policy RL. We argue that RAP can be viewed analogous to a full model, as it is limiting the action space by sampling a fixed number of actions as opposed to using generalized affordances. Our approach is complimentary to their work by leveraging distribution-robust affordances that induce partial world models. 

More recent work~\citep{benechehab2024zero} explores using LLM as a  transition model, but due to high inference cost, explores a dynamics model as a function of the states alone as opposed to state-action pairs. We note however, that they are not focused on online search and planning, but instead in an offline RL setting. Furthermore, the central focus of our work is the generalized affordance framework which extends prior work~\cite{khetarpal2021temporally, khetarpal2020can} from a single task to multi-task perspective.

\section{Discussion}
\label{sec:discussion}
This work generalized the concept of affordances to multi-task setting by delineating task-agnostic and task-specific intents. This perspective explains when and why certain affordances specific to an agent embodiment generalize across tasks, and when they might be task-specific. Our theoretical analysis established that tow findings. First, we identified that agents achieving task-agnostic intents must encode a predictive partial model of the world informed by distribution-robust affordances for a certain task distribution. Second, we showed that while near-perfect partial models enable improved search efficiency, an adaptive corrective model enjoys the potential exponential speed up of the partial model, yet being robust to missing intents from the full model. Broadly, our work suggests that while naively using LLMs as world models for planning and search can be highly inefficient, posing them as partial models through intents and affordances offers tremendous potential in guiding the search especially in a large action space. 

Future work requires overcoming the limited nature of specifying task-agnostic intents in the form of prompt engineering. One approach that is much more promising in this direction considers program synthesis for intent-completion~\cite{cherif2025cracking} identifying affordances from trajectories, resulting in coded partial models of the world that can be reused in more complex sequential task settings. Another promising direction requires the formalism of affordances to account for dynamic agent morphologies, which is scope for future work. Further, we anticipate more benefits from adapting the agent's latent representation of the pre-trained world model in light of online experiential data~\cite{silver2025welcome}. 

\section*{Impact Statement}
This work studies how affordances enable partial world models with LLMs. By explicitly modeling the limits of posing LLMs as full world models, our approach aims to find more efficient ways to search for RL agents. We presented work whose goal is to advance the field of Machine Learning. There are many potential societal consequences of our work, none which we feel must be specifically highlighted here.

\section*{Acknowledgements}

The authors are grateful to David Abel for his thorough and constructive feedback on a draft of this paper, and to Anna Koop and Benjamin Van Roy for their insightful reviews of an earlier version.

\bibliography{references}
\bibliographystyle{icml2026}

\newpage
\onecolumn
\appendix
\section*{Appendix}
\section{Additional Empirical Details}

\subsection{Details of the Approach}

\begin{figure*}[th!]
    \centering
    \includegraphics[width=0.60\linewidth]{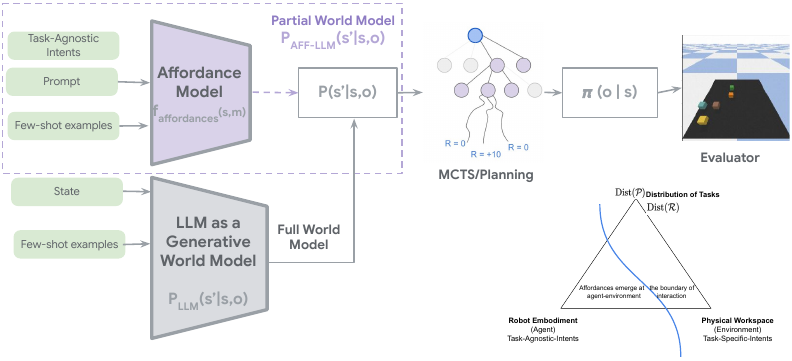}
    \caption{\textbf{Our Approach} leverages LLMs as partial world models using affordances as depicted here. Affordance-informed partial models are then used to guide MCTS in table top robotics task. We posit LLMs as-is to be generative world models, which we refer to as full world models. Partial Models are induced via affordances generated by LLMs using task-agnostic intents specified in the prompt.}
    \label{appfig:approach_overview}
\end{figure*}

\subsection{ Task-Agnostic Intents used in Table~\ref{tab:3blocksenv_mcts}, Table~\ref{tab:5-7-blocksenv_mcts}}
\label{app:task-agnostic-intents}
\begin{enumerate}
\setlength\itemsep{-0.30em}
    \item A block is movable only if that color is present in the current state and when nothing is on top of it.
    \item A block placement on x,y coordinates which has an existing block is valid if it exactly matches the coordinates of the existing block in the current state AND if the block color is present in the current state.
    \item A block placement on x,y coordinates which has an existing block is valid if it has sufficient overlap with the coordinates of the existing block in the current state AND if the block color is present in the current state.
    \item A block placement on x,y coordinates is valid if no other block is present in those coordinates.
\end{enumerate}

\section{Theoretical Derivations}
\subsection{Proof of Theorem 1.}
\label{app:prooftheorem1}

\textbf{Setup.} As we are interested in recovering an agent's partial model across a fixed distribution of tasks i.e. $T^i \sim \text{Dist}(\mathcal{T})$, we consider the tasks which share a common underlying environment.
I.e. the transition function is fixed, while the reward function across tasks (and potentially the discount factor) varies. Henceforth, we interchangeably use `goals' or `rewards' in place of `tasks'. 

We consider tasks composed of primitive sub-tasks of the following form, 
\begin{itemize}
    \item \textit{Actions}. The agent takes some desired abstract action $o \in \mathcal{O}$.
    
    \item \textit{Outcomes}. The last abstract action $O = o$ taken by the agent terminates in some desired state(s) $g\subseteq \mathcal S$, denoted $X(s \in g)$. 
    This can be thought of as the simplest temporally extended intents \ref{def:intents}

    \item \textit{Goals}. The agent steers the environment into some desired state $s\in \mathcal S$ using the actions afforded to it, over any number of time steps, denoted $F(s\in g)$.
    
\end{itemize}
Here $F$ is the Eventually operator and $X$ is the Next operator defined in linear temporal logic (LTL) \citep{pnueli1977temporal}.
Abusing notation, $T$ also denotes the LTL expression corresponding to the abstract task $T$, which evaluates to True if the agent-environment trajectory $\tau$ which achives (satisfies) the task specification (denoted $\tau \models T$) and False otherwise.

We are interested in agents that are capable of solving tasks with non-trivial complexity, i.e. consisting of multiple sub-tasks to be performed in a specified order.
We denote these sequences of sub-tasks $T_1, T_2, \ldots, T_n$ (Definition~\ref{def: depth n tasks}) using angled brackets $\langle T_1, T_2, \ldots, T_n\rangle$, to denote the LTL expression that evaluates to true if $T_1$ is satisfied, and then $T_2$, and so on. 
See \citep{richens2025general} for the construction of these LTL expressions. 

\textbf{Example.} The task `move the red block from $x$ to $y$' can be expressed as $\langle (O = o), X(s\in g)\rangle$ where $o$ is the abstract action `move block (red) from position (x) to position (y)', and $g$ is the set of environment states where the red block is in position y. 
Note the sequential nature of this task, which required not only taking the abstract action, but to achieve the desired outcome. This is to account for imperfect affordances, whereby the abstract action $O = o$ may not terminate in the desired state. 

\textbf{Example.} With the aforementioned task, we may want to relax the requirements on the agent, so that it can re-attempt the task if it fails to move the red block to the desired position. E.g. it could pick up the wrong block, or due to imperfect sensorimotor control fail to place the block in the desired location.
This may require chaining together different actions depending on outcomes (e.g. if the red block is placed in $y'$ applying the action `move the red block from (y') to (y)'). 
This is now a purely outcome-based goal, which the agent can achieve by any valid sequence of actions without a time horizon constraint, and so we can represent it as $T = \langle F(s \in g)\rangle$ where $g$ is all states where a red block is in position $y$.

We also allow for tasks with some ambiguity in how they are solved. 
For example, there can be multiple goal states whereby reaching any one of them would satisfy the overall task. 
More generally, there can be multiple sub-tasks---themselves sequential tasks---which are sufficient to achieve the overall task. 
We can construct specifications for these composite tasks by taking the Boolean OR over sequential tasks $T = T_1 \vee T_2$.

\textbf{Example.} The task of providing dinner could be achieved by (order a delivery THEN collect the delivery) OR (buy groceries THEN prepare a meal THEN serve the meal). 
This can be expressed as $ T = T_1\vee T_2$ where $T_1 = \langle (O = \text{order delivery}), X (S = \text{delivery collected})\rangle$ and $T_2 = \langle (O = \text{buy groceries}]),  X (S = \text{meal prepared})\rangle$. 

Each of these abstract actions must be afforded to the agent, e.g. an agent that has no phone or internet access will not be afforded the abstract action `order a delivery'.

The tasks we consider consist of sequences of abstract actions, with the task depth $n$ being the maximum number of sub-tasks the agent would have to complete to achieve the overall task (Definition~\ref{def: depth n tasks}). 
Intuitively, tasks with a higher depth $n$ are more complex as they require a larger number of sub-tasks to be completed, and hence require more planning.  
For example, if we consider an explicitly model-based agent, tasks with a high depth require more accurate world models compared to e.g. myopic tasks ($n=1$), due to compounding errors. 
Therefore to establish that an agent has an implicit world model, we must assume it is competent at tasks of non-trivial depth $n$ (Definition \ref{def:agent}).
This is a relaxation of the assumptions on the agent in \citep{richens2025general} as we do not assume the agent satisfies a strict regret bound for all tasks, but only on average over the task distribution, and relax the regret bound to a soft bound where the agent can violate $\zeta$ with some probability.  
As in \citep{richens2025general} we also make the simplifying assumptions that the agent follows a deterministic policy. Turning to the environment, Assumption~\ref{ass:communicatingCMDP} is a standard assumption in MDPs. 

\restatemaintheorem*

\begin{proof}
We use the approach in Sec.~\ref{sec:formalization} to return an estimate $\hat P(s'\mid o, s)$ given $\pi(o\mid h)$. Following the proof of Theorem 1 in \citep{richens2025general}, we define the following tasks.  Let $(s, o) \in \mathcal{AF}_{I^\rightarrow}$ and let $T_{o_1}(k, n)$ denote task which is a disjunction over all tasks $\bigvee_{r \leq k}\bigvee_{i \in nCr}\varphi(r, i)$ where 
\begin{equation}
    \varphi_i(r,n) := \langle {T}_0, \underbrace{{T}_1, {T}_2,\ldots {T}_1, {T}_2'}_{n \text{ times}}\rangle 
\end{equation}
where the agent
\begin{itemize}
    \item[i)] takes action $O = o_1$ in the first time step ($ T_0 = (O = o_1)$), then transitions eventually to $S = s$ and takes action $O= a$, (${T}_1 = F((S = s, O = o_1))$),
    \item[ii)] transitions next to a goal state which is either $S = s'$, (${T}_2 = X (S = s')$) or $S\neq s'$, (${T}_2' = X(S \neq s')$),
    \item[iii)] returns eventually to $S = s$ and takes action $O = o_1$, (${T}_1$), and repeats the cycle ii)-iii) a total of $n$ times, with the transition ${T}_2 = X(S'= s)$ occurring $r$ times and the transition ${T}'_2 = X(S \neq s')$ occurring $n-r$ times. 
\end{itemize}
I.e. ${T}_{o_1}(k, n)$ requires the agent to take abstract action $O = o_1$ in state $S = s$ a total of $n$ times, and at most $k$ of these actions should terminate in the state $S = s'$.  
Note that $n$ here is the number of times the agent takes $O = o$ in $S = s$, but the goal depth is $2n + 1$.

Consider the task ${T}_{o_2}(k, n)$ that is identical to ${T}_{o_1}(k, n)$ except that to achieve the goal the agent must i) initially take action $O = o_2$ instead of $O = o_1$ and ii) must terminate in $S = s$ at least $k+1$ times. 

Note we can construct the task $ T_{o_1,o_2} (k, n) = {T}_{o_1}(k, n)\vee {T}_{o_2}(k, n)$ for any pair of abstract action $o_1, o_2 \in \mathcal{AF}_{I^\rightarrow}(\tilde s)$ such that $o_1 \neq o_2$ (where by assumption at least two such actions exist).
As $ {T}_{o_1}(k, n)$ and $ {T}_{o_2}(k, n)$ are mutually exclusive we have, 
\begin{equation}
    P(\tau \models  T_{o_1,o_2} (k, n) \mid \pi, s_0) = P(\tau \models  {T}_{o_1}(k, n) \mid \pi, s_0) + P(\tau \models  {T}_{o_2}(k, n) \mid \pi, s_0)
\end{equation}
and for any $\pi$ only one of the terms on the right hand side is non-zero. 
Hence to determine the optimal policy we can evaluate $\max_\pi P(\tau \models {T}_{o_1}(k, n) \mid \pi, s_0)$ and $\max_\pi P(\tau \models {T}_{o_1}(k, n) \mid \pi, s_0)$. First we consider the case where $\delta = 0$ (the agent satisfies a strict regret bound for all $ T_i \sim \text{Dist}(\mathcal T)$).
As $\pi$ is deterministic by assumption, let $O = o$ be the agent's action at $t = 0$. 
If the agent chooses $O = o_1$ then only ${T}_{o_1}(k, n)$ can be satisfied and likewise for $O = o_2$, and any other action choice achieves the task with probability 0. 

${T}_{o_1}(k, n) = \bigvee_{r\leq k}\bigvee_{i\in nCr} \varphi_{i}(r, n)$ and all $\varphi_{i}(r, n), \varphi_{j}(r', n)$ are mutually exclusive, i.e. for $r\neq r'$ or $i \neq j$, $\tau \models \varphi_{i}(r, n) \implies \tau \not\models \varphi_{j}(r', n)$ and vice versa, and hence, 
\begin{equation}
    P(\tau \models {T}_{o_1}(k, n) \mid \pi, s_0) = \sum\limits_r \sum\limits_{i=1}^{nCr} P(\tau \models \varphi_{i}(r, n)\mid \pi, s_0)
\end{equation}
and by Lemma 6 in \citep{richens2025general} the maximum probability that this goal can be satisfied is given by,
\begin{equation}
    \max\limits_\pi P(\tau \models  {T}_{o_1}(k, n) \mid \pi, s_0) = \sum\limits_{r = 0}^k P_n(X = r) = P_n(X \leq k)
\end{equation}
where
\begin{equation}
    P_n(X = r) := \frac{n!}{(n-r)!r!}P_{ss'}(a)^r (1-P_{ss'}(a))^{n-r}
\end{equation}
is the binomial probability mass function and $P_n(X \leq k)$ is the cumulative distribution function. Likewise if the agent chooses the initial action $o_2$ (pursuing ${T}_{o_2}(k, n)$), it can achieve this task with maximum probability,
\begin{equation}
    \max\limits_\pi P(\tau \models {T}_{o_2}(k, n) \mid \pi, s_0) = \sum\limits_{r = k+1}^n P_n(X = k) = P_n(X > k)
\end{equation}

If the initial action is neither $o_1$ nor $o_2$ then the agent achieves task  $T_{o_1,o_2} (k, n)$ with probability zero.
Therefore, for an agent satisfying a $\zeta$ regret bound the optimal agent's policy $\pi(o \mid s_0 ;  T_{o_1,o_2} (k, n))$ witnesses the following inequalities, depending on its choice of abstract action $o_1$ or $o_2$,
\begin{align}
    O = o_1 &\implies \,P_n(X \leq k) \geq P_n(X > k)-\zeta\label{eq: ineq 1}\\
    O = o_2 &\implies \, P_n(X > k) \geq P_n(X \leq k)-\zeta\label{eq: ineq 2}\\
    O \not\in \{o_1, o_2\}&\implies \max\{P_n(X \leq k), P_n(X > k)\} \leq \zeta\label{eq: useless inequality}
\end{align}

For $\zeta < 1/2$ we can ignore \eqref{eq: useless inequality} as $ P_n(X > k) = 1-P_n(X \leq k)$ so one of these terms must be larger than $\zeta$. 

For ease of notation we denote $P_{ss'}(a) = p$. 
The median of the binomial distribution $X = m$ is an integer $0 \leq m \leq n$ that satisfies $np-1 \leq m \leq np + 1$. 
For $\zeta = 0$ we can exactly determine the median by varying $k$ and finding the unique value for which $k^*$ such that $P_n(X> k^*-1)\geq P_n(X\leq k^*-1)$ and $P_n(X\leq k^*)\geq P_n(X> k^*)$, yielding $k^* = m$. 

For $\zeta > 0$ we can no longer determine the median with certainty, but can bound its value. 
Using $P(X > k) = 1 - P(X \leq k)$ and $P(X > k-1) = P(X \geq k)$, and \eqref{eq: ineq 1} and \eqref{eq: ineq 2} yields, 

\begin{align}
     O = o_1 &\implies \,P_n(X \leq k) \geq \frac{1-\zeta}{2}\\
    O = o_2 &\implies \, P_n(X \geq k) \geq \frac{1-\zeta}{2}\\
\end{align}

Let $k = m^-$ be the largest value of $k$ for which $O = o1$, and $k = m^+$ be the smallest value of $k$ such that $O = o_2$. The true median $m$ must lie between these values. These yield the tightest bounds, 

\begin{align}
     O = o_1 &\implies \,P_n(X \leq m^-) \geq \frac{1-\zeta}{2}\label{eq: ineq 1 new}\\
    O = o_2 &\implies \, P_n(X \geq m^+) \geq \frac{1-\zeta}{2}\label{eq: ineq 2 new}
\end{align}

Applying the upper tail Chebyshev inequality $P_n(X \geq \mu + t \sigma) \leq 1/(1+t^2)$ with $m^+ = \mu + t \sigma$ to \eqref{eq: ineq 2 new} and using $\mu = np$, $\sigma^2 = n p (1-p)$ yields, 
\begin{equation}\label{eq: upper bound m}
    m^+ - \mu \leq \sigma \sqrt{\frac{1 + \zeta}{1 - \zeta}}
\end{equation}
and likewise for the lower tail Chebyshev bound applied to \eqref{eq: ineq 1 new} we recover, 

\begin{equation}
     \mu - m^- \leq \sigma \sqrt{\frac{1 + \zeta}{1 - \zeta}}
\end{equation}

For $\delta = 0$ these bounds would allow us to recover bounds on $\mu$ (and hence $p$) as done in \citep{richens2025general}, as the lowest value of $k$ such that the agent's policy returns $O = o_2$ for task $T_{o_1,o_2} (k, n)$ must lie between $m^-$ and $m^+$, yiedling in a bounded error estimate for the median. However, for $\delta > 0$ by Definition~\ref{def:agent} for any given task $T_i$ there is a probability of at most $\delta$ that the agent's policy violates the $\zeta$ regret bound, and so the agent may return $O = o_2$ for $k < m^-$ or $O = o_1$ for $k > m^+$, albeit with probability at most $\delta$. 

Let $f(k, n)\in \{0, 1\}$ $\forall$ $k\in 1, \ldots, n$ be an indicator variable for the action returned by the agent's policy for task $T_{o_1,o_2} (k, n)$, where for a given $k, n$ we have $O = o_1 \rightarrow f(k, n) = 0$ and $O = o_2 \rightarrow f(k, n) = 1$.  
To estimate the median we fit a step function $F(\hat k)$ which predicts $F(k) = 0$ $\forall$ $k \leq \hat k$ else $F(k) = 1$. 
The error of this model at reproducing $f(k)$ is, 

\begin{equation}\label{eq: error equation}
    E(\hat k, n) = \sum_{k=1}^{\hat k}f(k) + \sum_{k = \hat k + 1}^n(1-f(k))
\end{equation}

We will take $\hat k := \argmin \text{error}(\hat k, n)$ to be our estimate of the median, and derive an upper bound on the expected error $|\hat k - m|$. 
Our aim is to determine a probabilistic lower bound on the value of $\hat k$.
Hence, we can consider the case where we set $f(k \geq m^-) = 1$, as this can only reduce the value of $\hat k$. 
For all $k< m^-$, we can only have $f(k) = 1$ if the agent's policy violates the $\zeta$ regret bound, which by Definition~\ref{def:agent} occurs for each such $k$ independently with a probability at most $\delta$. 
Therefore we can set $P(f(k) = 1) = \delta$ for all $k< m^-$, as this is maximizing the probability of $f(k) = 1$ for $k < m^-$ which reduces the value of $\hat k$.
In this case the $\hat k$ is given by \eqref{eq: error equation} but replacing $n$ with $m^-$. 

Let $y(k) = 2 f(k) - 1$, yielding $\mathbb E[Y(k)] = 1 - 2 \delta$. 
Let $E(0)$ be the error for the model that predicts $f(k) = 1$ for all $k$. 
We can express the error $E(k) = E(k, m^-)$ as a telescoping sum, 

\begin{align}
    E(k) &= E(0) + \sum\limits_{i = 1}^{m^-}(E(k) - E(k-1))\\
    &= E(0) +  \sum\limits_{i = 1}^{m^-} Y_i
\end{align}

Let $S(k) = \sum_{i=1}^{m_-} Y_i$ be a random walk with drift $\mu = E[Y_i] = 1 - 2 \delta$. 
As $E(0)$ is a constant offset, $\hat k= \argmin_{0 \leq k \leq m^-} E(k) = \argmin_{0 \leq k \leq m^-} S(k)$, and so minimizing $E(k)$ is equivalent to minimizing $S(k)$.
We can therefore derive a worst case probabilistic lower bound on $\hat k$ by deriving a similar bound for the minimum of the random walk $S_k$.   
We desire a bound on the event $A = \{\hat k \leq m^- - \Delta \}$ where $\Delta > 0$. 
$A$ implies a weaker condition $B = \{\min_{0 \leq k \leq m^- - \Delta} S_k \leq S_{m^-}\}$, therefore if we bound the probability of $B$ we upper bound the probability of $A$. 
Using $P(B) = P(\min_{0 \leq k \leq m^- - \Delta} [S_k \leq S_{m^-}]) = P(\max_{0 \leq k \leq m^- - \Delta}[ (S_{m^-} - S_k) \geq 0])$. 
For $S_{m^-} - S_k = \sum_{i= k + 1}^{m^-}Y_i$ and $Y_i$ are iid, this is equal to a sum over $m^- - k$ iid Bernoulli variables, and the number of terms in the sum ranges from $m^-$ (for $k = 0$) to $\Delta$. 
We can therefore replace this sum with $W_l := \sum_{j=1}^l Y'_j$ with $l$ ranging from $\Delta$ to $m^-$, yielding $P(B) = P(\max_{\Delta \leq l \leq m^-}[W_l \geq 0]) = P(\bigcup_{l = \Delta}^{m^-}[W_l \geq 0])$.
Applying the union bound gives $P(B) \leq \sum_{l = \Delta}^{m^-}P(W_l \geq 0)$.

Next, we can upper bound $P(W_l \geq 0)$ using the Chernoff bound to give,
\begin{equation}
    P(W_l \geq 0) \leq \inf\limits_{s>0} E[e^{s W_l}]= \inf\limits_s E[e^{s \sum\limits_{i=1}^l Y_i}] = \inf\limits_s E[e^{s Y}]^l
\end{equation}

Using $E[e^{s Y}] = (1-\delta) e^{-s} + \delta e^s$ and minimizing w.r.t $s$ gives, 

\begin{equation}
    P(W_l \geq 0) \leq 
    \left(2\sqrt{\delta (1-\delta)}\right)^l
\end{equation}

for $\delta < 1/2$.
Finally, we can determine a probabilistic upper bound, \begin{equation}
    P(\hat k \leq m^- - \Delta) \leq \sum\limits_{l = \Delta}^{m^-}P(W_l \geq 0) \leq \sum\limits_{l = \Delta}^{m^-} \left(2\sqrt{\delta (1-\delta)}\right)^l 
\end{equation}

Letting $\rho := 2\sqrt{\delta (1-\delta)}$ the sum over this geometric series gives, 

\begin{equation}
    P(\hat k \leq m^- - \Delta) \leq \rho^\Delta \left(\frac{1 - \rho^{m^- - \Delta}}{1- \rho}  \right)\leq \frac{\rho^\Delta}{1 - \rho}
\end{equation}

We can repeat these steps for deriving an upper bound to $\hat k$ as a deviation from $m^+$, and (by symmetry) arrive at

\begin{equation}
    P(\hat k \geq m^+ + \Delta) \leq \frac{\rho^\Delta}{1 - \rho}
\end{equation}

Finally, using the (identical) upper bounds for $m^\pm$ given by \eqref{eq: upper bound m} we recover, 

\begin{equation}
    P(|\hat k - n p| \geq \sigma \sqrt{\frac{1 + \zeta}{1- \zeta}} + \Delta) \leq \frac{\rho^\Delta}{1 - \rho}
\end{equation}

Finally, using $\hat p = \hat k / n$ as our estimate for $p$, and $\sigma = \sqrt{n p (1-p)}$ yields a probabilistic error bounds, 

\begin{equation}
    P(|\hat p -  p| \geq  \phi + \epsilon) \leq \frac{\rho^{n\, \epsilon}}{1 - \rho}
\end{equation}

where
\begin{equation}
    \phi = \sqrt{\frac{p(1-p)(1 + \zeta)}{n(1- \zeta)}}
\end{equation}

and where we have defined $\epsilon := \Delta / n$, and $\epsilon \in [0, 1]$. 
Finally, we relax the bound using $p(1-p) \leq 1/4$ to give a simpler expression. 
\end{proof}

\subsection{Adaptive Corrected Model and Proof of \cref{theorem:fastersearch}.}
\label{app:prooftheorem2}

Let us consider a single task ${T}_i$.  We use a Las Vegas algorithm (Monte Carlo with restarts until success) over sequences of intents of length $L$. If the terminating reward is received before or when all $L$ intents are fully performed, the task terminates with success, otherwise the sequence has failed and another sequence is sampled until success.

In a sampled sequence (a trial), each intent is sampled uniformly and independently of the past. That is, the probability of any intent $I_j$ for the full model is $\Pfull(I_j) = 1/n$. For the partial model it is $\Ppartial(I_j) = 1/k$ if $I_j\in \Ipartial$, 0 otherwise. Hence, each sequence has a probability $n^{-L}$ to be sampled for the full model, and $k^{-L}$ for the partial model.

For any single trial, the success probability for the full model is 
\begin{align*}
    \Pfull&(\success_i) \\
    &= \sum_{I_1\dots I_L \in \Ifull^L} P(\success_i\mid I_1\dots I_L) \Pfull(I_1\dots I_L) \\
    &= \sum_{I_1\dots I_L \in \Ifull^L} P(\success_i\mid I_1\dots I_L) \Pfull(I_1)\times\dots\times \Pfull(I_L) \\
    &= n^{-L}\sum_{I_1\dots I_L \in \Ifull^L} P(\success_i\mid I_1\dots I_L)
\end{align*}
and it is well known that the expected number $\Nfull_i$ of trials before success on task ${T}_i$ is
\begin{align*}
    \Nfull_i = \frac1{\Pfull(\success_i)} = \frac{n^L}{\sum_{I_1\dots I_L \in \Ifull^L} P(\success_i\mid I_1\dots I_L)}
\end{align*}
The expected number of trials necessary to solve every task is
\begin{align*}
    \Nfull = \sum_{i\leq |\mathcal{T}|} \Nfull_i = n^L \sum_{i\leq |\mathcal{T}|}  \frac{1}{\sum_{I_1\dots I_L \in \Ifull^L} P(\success_i\mid I_1\dots I_L)}
\end{align*}
We assume that every task can be solved in finite time by the full model.

Similarly, for the partial model the expected number $\Npartial$ of trials to solve every task is 
\begin{align*}
    \Npartial = k^L \sum_{i\leq |\mathcal{T}|}  \frac{1}{\sum_{I_1\dots I_L \in \Ipartial^L} P(\success_i\mid I_1\dots I_L)}
\end{align*}
Since in general $k \ll n$, if the intents of the partial model have a good enough coverage to solve all tasks, the potential speedup of $(n/k)^L$ compared to the full model can be gigantic.
But if the partial model misses \emph{any necessary} intent to solve \emph{any} task, $\Npartial$ is infinite.
Even with all the \emph{necessary} intents to ensure that each task can be solved, it may still miss \emph{important} intents
that could help to solve tasks much faster.
To avoid this, we define a \emph{corrected} partial model which assigns a small probability to use missing intents from the full model:
For each intent $I_j$,
\begin{align*}
    \Pcor(I_j\mid \epscor) &= (1-\epscor) \frac1k\indicator{I_j \in \Ipartial} + \epscor\frac1{n-k}\indicator{I_j \notin \Ipartial}
\end{align*}
Observe that setting $\epscor=0$ recovers the partial model, while $\epscor=(n-k)/n$ recovers the full model.

Therefore,
for a fixed $\epscor$,
and for a given sequence of intents $J = I_1\dots I_L$, the probability of success of the corrected model for task ${T}_i$,
where $m_J \leq L$ of the intents come exclusively from the full model ($m_J$ is the number of `mistakes'), 
is
\begin{align*}
    \Pcor(\success_i | \epscor) 
    &=  \sum_{J = I_1\dots I_L \in \Ifull^L} P(\success_i\mid J)\prod_{I_j\in J}\Pcor(I_j\mid \epscor) \\
    &= \sum_{J \in \Ifull^L} 
    P(\success_i\mid J)\left(\frac{1-\epscor}{k}\right)^{L-m_J}\left(\frac{\epscor}{n-k}\right)^{m_J}\,.
\end{align*}
But how do we set $\epscor$?
We can at least define the optimal $\epscor^{*}_{i}$ for each task $i$:
\begin{align*}
    \epscor^{*}_{i} = \argmax_{\epscor\in[0,1]} \Pcor(\success_i\mid \epscor)\,,
\end{align*}
but in general $\epscor^{*}_{i}$ is not known.
So we construct an \emph{adaptive} corrected model that is competitive within a factor $(2L+3)e$ with the optimal $\epscor^{*}_{i}$ for each task $i$:
Before sampling a sequence of $L$ intents, we sample $m \in \{0,\dots, 2L+2)\}$ with uniform probability $P(m) = 1/(2L+3)$,
and we set $\epscor = m/(2L+2)$.
The probability of an intent (given $m$) for this adaptive model is
\begin{align*}
    \Pada(I_j\mid m) = \Pcor\left(I_j\mid \epscor = \frac{m}{2L+2}\right)\,.
\end{align*}

For a given sequence of intents $J = I_1\dots I_L$, the probability of success of the adaptive model for task ${T}_i$ is
\begin{align}\label{eq:Pada}
    \Pada(\success_i) 
    &=  \sum_{J = I_1\dots I_L \in \Ifull^L} \sum_{m= 0}^{2L+2} \frac1{2L+3} P(\success_i\mid J)\prod_{I_j\in J}\Pcor\left(I_j\mid \epscor = \frac{m}{2L+2}\right) \\
    &= 
    \frac1{2L+3}
    \sum_{J \in \Ifull^L} 
    \sum_{m= 0}^{2L+2}
    P(\success_i\mid J)\left(\frac{1-\epscor}{k}\right)^{L-m_J}\left(\frac{\epscor}{n-k}\right)^{m_J}\,, \quad\quad \epscor= \frac{m}{2L+2}\,.
\end{align}
\begin{lemma}\label{lem:Pada}
For each task $i\leq |\mathcal{T}|$,
there exists an $m\in\{0..2L+2\}$ such that for every sequence $J\in\Ifull^L$,
\begin{align*}
  \prod_{I_j\in J}\Pcor\left(I_j\mid \epscor = \epscor^*_i\right) \leq e\times \prod_{I_j\in J}\Pcor\left(I_j\mid \epscor = \frac{m}{2L+2}\right)\,.
\end{align*}
\end{lemma}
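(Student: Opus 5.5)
The plan is to reduce everything to a comparison of two scalar functions. For a fixed sequence $J$ with $m_J$ mistakes, the product is
\[
\prod_{I_j\in J}\Pcor(I_j\mid\epscor)=k^{-(L-m_J)}(n-k)^{-m_J}\,(1-\epscor)^{L-m_J}\epscor^{m_J}.
\]
The constant factor $k^{-(L-m_J)}(n-k)^{-m_J}$ does not depend on $\epscor$. So it suffices to find a single grid point $\eps'=m/(2L+2)$ such that, for every $a\in\{0,\dots,L\}$ (playing the role of $m_J$),
\[
(1-\epscor^*_i)^{L-a}(\epscor^*_i)^{a}\le e\,(1-\eps')^{L-a}(\eps')^{a}.
\]
Write $\eps^*:=\epscor^*_i$. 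Call the grid spacing $h=1/(2L+2)$. The idea is to pick the grid point adjacent to $\eps^*$ so that moving there shrinks neither the $\epscor$ factor nor the $(1-\epscor)$ factor by too much.

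\textbf{Choosing the grid point.} By symmetry under $\epscor\leftrightarrow1-\epscor$ and $a\leftrightarrow L-a$, assume $\eps^*\le 1/2$. Take $\eps'=\lceil \eps^*/h\rceil h$, the smallest grid point that is at least $\eps^*$.

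\begin{itemize}
\item Since $\eps'\ge\eps^*$, we get $(\eps')^a\ge(\eps^*)^a$, so the $\epscor$ factor only grows.
\item For the other factor, $\eps'-\eps^*\le h$, and $1-\eps^*\ge 1/2$.
\end{itemize}

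Hence
\[
\frac{1-\eps'}{1-\eps^*}\ \ge\ 1-\frac{h}{1-\eps^*}\ \ge\ 1-2h\ =\ 1-\frac{1}{L+1}.
\]
Also $\eps'\le 1/2+h\le 1$, so $\eps'$ is a valid grid point $m/(2L+2)$ with $m\le 2L+2$.

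\textbf{Bounding the ratio.} Raising to the power $L-a\le L$ gives
\[
\Bigl(\frac{1-\eps'}{1-\eps^*}\Bigr)^{L-a}\ \ge\ \Bigl(1-\frac1{L+1}\Bigr)^{L}=\Bigl(\frac{L}{L+1}\Bigr)^{L}\ \ge\ e^{-1}.
\]
The last step uses $(1+1/L)^L\le e$. Combining this with the monotonicity of the $\epscor^{a}$ factor yields the displayed inequality for every $a$, and therefore for every $J$. The grid point depends only on $\eps^*$, as required.

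For the case $\eps^*>1/2$, take instead $\eps'=\lfloor\eps^*/h\rfloor h$. The $(1-\epscor)$ factor then grows, and $\eps'/\eps^*\ge 1-h/\eps^*\ge 1-2h$, so the same bound on the power $a\le L$ applies.

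\textbf{Expected obstacle.} The delicate point is getting exactly the constant $e$. This is why the grid has resolution $1/(2L+2)$ rather than $1/L$: the factor-of-2 loss from $1-\eps^*\ge1/2$ must be absorbed, leaving $(1-1/(L+1))^{L}\ge 1/e$. The edge cases $\eps^*\in\{0,1\}$ are themselves grid points, so they are handled trivially. I will check them explicitly, with the convention $0^0=1$.
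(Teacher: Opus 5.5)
Your proof is correct and follows the paper's argument. You choose the same grid point, $m=\lceil (2L+2)\eps^*\rceil$ when $\eps^*\le 1/2$ and $m=\lfloor (2L+2)\eps^*\rfloor$ otherwise; you use $\eps^*\le 1/2$ to bound the $(1-\epscor)$ factor ratio by $1+1/L$ (equivalently, by $L/(L+1)$ from below); and you conclude with $(1+1/L)^{L}\le e$. The only differences are cosmetic: you compare the ratio directly instead of its logarithm, and you spell out the symmetric case $\eps^*>1/2$, which the paper leaves as ``similar.''
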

\begin{proof}
Rename $\eps^{*}  = \epscor^{*}_i$ and $\eps = m/(2L+2)$.
Then
\begin{align*}
    \sum_{I_j\in J}\ln \frac
    {\Pcor\left(I_j\mid \epscor = \epscor^{*}_i\right)} 
    {\Pcor\left(I_j\mid \epscor = \frac{m}{2L+2}\right)}
    =
    (L-m_J)\ln\frac{1-\eps^*}{1-\eps} + m_J\ln\frac{\eps^*}{\eps}\,.
\end{align*}
\textbf{Case $\eps^* \leq 1/2$.}
Choose $m = \min \{m\in\{0..2L+2\}: \frac{m}{2L+2}\geq \eps^*\}$, that is $m = \lceil(2L+2)\eps^*\rceil$.
Then, first, $\eps = m / (2L+2) \geq \eps^*$ by definition of $m$
and thus $\eps^*/\eps \leq 1$.
Second,
\begin{align*}
    \eps = \frac{\lceil(2L+2)\eps^*\rceil}{2L+2}\leq \eps^* + \frac1{2L+2}\,,
\end{align*}
and thus,
\begin{align*}
    \frac{1-\eps^*}{1-\eps} \leq \frac{1-\eps^*}{1-\eps^* - 1/(2L+2)}
    = 1+ \frac{1/(2L+2)}{1-\eps^*-1/(2L+2)} = 1+ \frac1{(1-\eps^*)(2L+2) - 1} 
    \leq 1+\frac1{L}\,,
\end{align*}
where we used $\eps^* \leq 1/2$ on the last inequality.
Therefore, for every sequence $J\in\Ifull^L$, using $\ln(1+x)\leq x$,
\begin{align*}
    \sum_{I_j\in J}\ln \frac
    {\Pcor\left(I_j\mid \epscor = \epscor^{*}_{i}\right)} 
    {\Pcor\left(I_j\mid \epscor = \frac{m}{2L+2}\right)}
    \leq (L-m_J) \ln \left(1+\frac1{L}\right) + m_J \ln 1 \leq 1\,.
\end{align*}
Exponentiating finishes the proof for this case.

\textbf{Case $\eps^* > 1/2$.}
Similar as Case $\eps^* \leq 1/2$ but choosing instead $m = \max\{m\in\{0..2L+2\}: 1-\frac{m}{2L+2} \geq 1-\eps^* \}$.
\end{proof}

\restateefficiencytheorem*
\begin{proof}
Applying \cref{lem:Pada} to \cref{eq:Pada}, and dropping all $m \in\{0..2L+2\}$ from the sum except the one satisfying \cref{lem:Pada} we obtain that
\begin{align*}
    \Pada(\success_i) 
    &\geq
    \frac1{e(2L+3)}
    \sum_{J \in \Ifull^L} 
    P(\success_i\mid J)\left(\frac{1-\epscor^{*}_i}{k}\right)^{L-m_J}\left(\frac{\epscor^{*}_i}{n-k}\right)^{m_J} \\
    &= \frac1{e(2L+3)}
    \max_{\epscor \in[0,1]}
    \sum_{J \in \Ifull^L} 
    P(\success_i\mid J)\left(\frac{1-\epscor}{k}\right)^{L-m_J}\left(\frac{\epscor}{n-k}\right)^{m_J}\\
    &= \frac1{e(2L+3)}
    \max_{\epscor \in[0,1]}\Pcor(\success_i\mid \epscor)\,,
\end{align*}
where the last line is by definition of $\epscor^{*}_{i}$.
The result follows by considering the expected time to success for task $i$ and summing over all tasks.
\end{proof}

\end{document}